\acrodef{DFPN}{Depth-First Proof Number Search}
\acrodef{ATL}{Alternating-time Temporal Logic}
\acrodef{CTL}{Computation Tree Logic}
\acrodef{LTL}{Linear Temporal Logic}
\acrodef{PDL}{Propositional Dynamic Logic}
\acrodef{GA}{Game Automaton}
\acrodef{MPS}{Minimal Proof Search}
\acrodef{PNS}{Proof Number Search}
\acrodef{MMLK}{Multi-agent Modal Logic K}
\newcommand{\ptimecomplete}{\textsc{ptime}-complete}
\newcommand{\npcomplete}{\textsc{np}-complete}
\newcommand{\gamename}{\textsc}
\DeclareMathOperator*{\argmin}{arg\,min}
\DeclareMathOperator{\MPN}{MPN}
\DeclareMathOperator{\MDN}{MDN}
\DeclareMathOperator{\FORALL}{\Box}
\DeclareMathOperator{\EXISTS}{\Diamond}
\DeclareMathOperator{\eqdef}{\stackrel{\mathrm{def}}{=}}
\newcommand{\Prop}{P}
\author{Abdallah Saffidine}
\institute{LAMSADE, Université Paris-Dauphine\\
  \email{abdallah.saffidine@dauphine.fr}}
\title{Minimal Proof Search for Modal Logic K Model Checking\protect{\thanks{A shorter version of this article appears in the proceedings of JELIA 2012.}}}
\title{Minimal Proof Search for Modal Logic K Model Checking\protect{\thanks{A longer version of this article is available at \url{http://arxiv.org/abs/1207.1832v1}.}}}
\begin{document}

\maketitle
\begin{abstract}
  Most modal logics such as S5, \ac{LTL}, or \ac{ATL} are extensions of Modal Logic K.
  While the model checking problems for \ac{LTL} and to a lesser extent \ac{ATL} have been very active research areas for the past decades,
  the model checking problem for the more basic \ac{MMLK} has important applications as a formal framework for perfect information multi-player games on its own.
  
  We present \ac{MPS}, an effort number based algorithm solving the model checking problem for \ac{MMLK}.
  We prove two important properties for \ac{MPS} beyond its correctness.
  The (dis)proof exhibited by \ac{MPS} is of minimal cost for a general definition of cost,
  and \ac{MPS} is an optimal algorithm for finding (dis)proofs of minimal cost.
  Optimality means that any comparable algorithm either needs to explore a bigger or equal state space than \ac{MPS}, or is not guaranteed to find a (dis)proof of minimal cost on every input.

  As such, our work relates to A* and AO* in heuristic search, to Proof Number Search and \ac{DFPN}+ in two-player games, and to counterexample minimization in software model checking.
\end{abstract}

\acresetall
\acused{DFPN}
\acused{ATL}
\acused{CTL}
\acused{LTL}
\acused{PDL}
\acused{MCPS}
\acused{PNPS}

\section{Introduction}

Model checking for temporal logics such as \ac{LTL} or \ac{CTL} is a major research area with important applications in software and hardware verification~\cite{ClarkeGP1999}.
Model checking for agent logics such as \ac{ATL} or S5 is now also regarded as an important topic with a variety of applications~\cite{vanderHoekW2002,vanDitmarschHK2003,LomuscioQR2009}.
On the other hand, Modal Logic K is usually considered the basis upon which more elaborate modal logics are built, such as S5, \ac{PDL}, \ac{LTL}, \ac{CTL}, or \ac{ATL}~\cite{BlackburnDeRijkeVenema2002,ShohamLeytonBrown2009}. 
\ac{MMLK} can also be used directly to model (sequential) perfect information games.

In this article, we put forward a model checking algorithm for \ac{MMLK} that we call \acf{MPS}.
As the name indicates, given a model checking problem $q \models \phi$, the \ac{MPS} algorithm outputs a proof that $q$ satisfies $\phi$ or a counterexample, this proof/counterexample being minimal for some definition of size.
Perfect information games provide at least two motivations for small proofs.
In game playing, people are usually interested in ``short'' proofs, for instance a \gamename{chess} player would rather deliver checkmate in three moves than in nine moves even if both options grant them the victory.
In game solving, ``compact'' proofs can be stored and independently checked efficiently.

In \ac{CTL} model checking, finding a minimal witness/counterexample is \npcomplete{}~\cite{ClarkeGMZ1995}.
\ac{MMLK} model checking, on the contrary, though \ptimecomplete~\cite{Lange2006JAL}, allows finding minimal witnesses/counterexamples relatively efficiently as we shall see in this article.

Our goal is related both to heuristic search and software model checking.
On one hand, the celebrated A* algorithm outputs a path of minimal cost from a starting state to a goal state.
This path can be seen as the proof that the goal state is reachable, and the cost of the path is the size of the proof.
On the other hand, finding small counterexamples is an important subject in software model checking.
For a failure to meet a specification often indicates a bug in the program, and a small counterexample makes finding and correcting the bug easier~\cite{GroceV2003}.

Like A*, \ac{MPS} is optimal, in the sense that any algorithm provided with the same information and guaranteed to find a proof of minimal size needs to do as many node expansions as \ac{MPS}.

The tableau-based model checking approach by Cleaveland for the $\mu$-calculus seems to be similar to ours~\cite{Cleaveland1989}, however it would need to be adapted to handle (dis)proof cost.
Also, in our understanding, the proof procedure \emph{check1} presented by Cleaveland can be seen as an unguided depth first search while our approach is guided towards regions of minimal cost.

The two algorithms most closely related to \ac{MPS} are AO*, a generalization of A* to And/Or trees, and \ac{DFPN}+~\cite{Nagai2002}, a variant of \ac{DFPN}, itself a depth-first variant of \ac{PNS}~\cite{AllisvdMvdH1994}.

While And/Or trees are as expressive as the combination of \ac{MMLK} and \acp{GA}, we believe that the separation of concerns between the logic and the \acl{GA} is beneficial in practice.
For instance, if the properties to be checked are encoded in the logic rather than in the graph, there is no need to rewrite the rules of \gamename{chess} if one is interested in finding helpmates instead of checkmates, or if one just wants to know if any piece can be captured in two moves from a given position.
The encoding through an And/Or graph would be different in every such situation while in our approach, only the modal logic formula needs to be adapted.
Another advantage of \ac{MPS} over AO* is that if the problem is not solvable, then \ac{MPS} finds a minimal disproof while AO* does not provide such a guarantee.\footnote{Following the convention in \acl{PNS}, we use the term proof and disproof instead of witness and counterexample which are more common in the model checking literature.}

\ac{DFPN}+ is typically only used to find a winning strategy for either player in two-player games.
\ac{MPS}, on the contrary, can be applied to solve other interesting problems without a cumbersome And/Or graph prior conversion.  Example of such problems range from finding ladders in two-player games to finiding paranoid wins in multi-player games.
Another improvement over \ac{DFPN}+ is that we allow for a variety of (dis)proof size definitions.
While \ac{DFPN}+ is set to minimize the total edge cost in the proof, we can imagine minimizing, say, the number of leaves or the depth of the (dis)proof.

In his thesis, Nagai derived the \ac{DFPN} algorithm from the equivalent best-first algorithm \ac{PNS}~\cite{Nagai2002}.
Similarly, we can obtain a depth-first version of \ac{MPS} from the best first search version presented here by adapting Nagai's transformation.
Such a depth-first version should probably be favoured in practice, however we decided to present the best first version in this article for two main reasons.
We believe the best-first search presentation is more accessible to the non-specialists.
The proofs seemed to be easier to work through in the chosen setting, and they can later be extended to the depth-first setting.

The remainder of this paper is structured as follows.  
In Sect.~\ref{sec:definitions} we recall the definitions of \acf{GA} and \ac{MMLK} and formally define (dis)proofs for the corresponding model checking problem.
Section~\ref{sec:mps} elaborates on the notion of (dis)proof cost and the associated basic admissible heuristic functions, it then proceeds with the presentation of the \ac{MPS} algorithm.
Finally, we prove the correctness of \ac{MPS}, the minimality of the output (dis)proofs and the optimality of the algorithm in Sect.~\ref{sec:theory}.
A short discussion concludes the article.

\section{Definitions}
\label{sec:definitions}

We define in this section various formal objects that will be used throughout the paper.
The \ac{GA} is the underlying system which is to be formally verified.
The \ac{MMLK} is the language to express the various properties we want to model check \acp{GA} against.
Finally, a (dis)proof is a tree structure that shows whether a property is true on a state in a \ac{GA}.

\subsection{\aclp{GA}}
A \ac{GA} is a kind of labelled transition system where both the states and the transitions are labelled.
If a \ac{GA} is interpreted as a perfect information game, then a transition corresponds to a move from one state to the next and its label is the player making that move. 
The state labels are domain specific information about states, for instance we could have a label for each triple (piece, owner, position) in \gamename{chess}-like games.
Naturally, it is also possible to give a formal definition of \acp{GA}.

\begin{definition}
  A \emph{\acl{GA}} is a 5-tuple $G = (\Prop, \Sigma, Q, \pi, \delta)$ with the following components:
  \begin{itemize}
  \item $\Prop$ is a non-empty set of \emph{atoms} (or state labels)
  \item $\Sigma$ is a non-empty finite set of \emph{agents} (or transition labels)
  \item $Q$ is a set of \emph{game states}
  \item $\pi : Q \rightarrow 2^\Prop$ maps each state $q$ to its labels
  \item $\delta : Q \times \Sigma \rightarrow 2^Q$ is a transition function that maps a state and an agent to a set of next states.
  \end{itemize}
\end{definition}

In the following, we will use $p$, $p^\prime$, $p_1$, \dots for atoms, $a$ for an agent, and $q$, $q^\prime$, $q_1$, \dots for game states.
We write $q \xrightarrow{a} q^\prime$ when $q^\prime \in \delta(q, a)$ and we read \emph{agent $a$ can move from $q$ to $q^\prime$}.
Note that $\delta$ returns the set of successors, so it need not be a partial function to allow for states without successors.
If an agent $a$ has no moves in a state $q$, we have $\delta(q,a) = \emptyset$.

\subsection{\acl{MMLK}}
Following loosely~\cite{BlackburnDeRijkeVenema2002}, we define the \acl{MMLK} over a set of atoms $\Prop$ as the formulas we obtain by combining the negation and conjunction operators with a set of \emph{box} operators, one per agent.
\begin{definition}
  The set $T$ of well-formed \emph{\acf{MMLK}} formulas is defined inductively as
  $\phi$ $:=$ $p$ \emph{|} $\neg \phi'$ \emph{|} $\FORALL_a \phi'$ \emph{|} $\phi_1 \wedge \phi_2$
  where $\phi$, $\phi'$, $\phi_1$,\dots stand for arbitrary \ac{MMLK} formulas
\end{definition}

We can define the usual syntactic shortcuts for the disjunction and the \emph{diamond} operators $\phi_1 \vee \phi_2 \eqdef \neg(\neg \phi_1 \wedge \neg \phi_2)$ and $\EXISTS_a \phi \eqdef \neg \FORALL_a \neg \phi$.
The box operators convey necessity and the diamond operators convey possibility:
$\FORALL_a \phi$ can be read as \emph{it is necessary for agent $a$ that $\phi$}, while $\EXISTS_a \phi$ is \emph{it is possible for $a$ that $\phi$}.

\subsection{The Model Checking Problem}
We can now interpret \ac{MMLK} formulas over \acp{GA} via the satisfaction relation $\models$.
Intuitively, a state in a \ac{GA} constitutes the context of a formula, while a formula constitutes a property of a state.
A formula might be satisfied in some contexts and not satisfied in other contexts, and some properties hold in a state while others do not.
Determining whether a given formula $\phi$ holds in a given state $q$ (in a given implicit \ac{GA}) is what is commonly referred to as \emph{the model checking problem}.
If it is the case, we write $q \models \phi$, otherwise we write $q \not\models \phi$.

It is possible to decide whether $q \models \phi$ by examining the structure of $\phi$, the labels of $q$, as well as the accessible states.

\begin{definition}
  The formulas \emph{satisfied} by a state $q$ can be constructed by induction as follows.
  \begin{itemize}
  \item If $p$ is a label of $q$, that is if $p \in \pi(q)$, then $q \models p$;
  \item if $q \not\models \phi$ then $q \models \neg \phi$;
  \item if $q \models \phi_1$ and $q \models \phi_2$ then $q \models \phi_1 \wedge \phi_2$;
  \item if for all $q^\prime$ such that $q \xrightarrow{a} q^\prime$, we have $q^\prime \models \phi$, then $q \models \FORALL_a \phi$.
  \end{itemize}
\end{definition}

\subsection{Proofs and Counterexamples}
\label{sec:proofs}

In practice, we never explicitly construct the complete set of formulas satisfied by a state.
So when some computation tells us that a formula $\phi$ is indeed (not) satisfied by a state $q$, some sort of evidence might be desirable.
In software model checking, a model of the program replaces the \ac{GA}, and a formula in a temporal logic acts as a specification of the program.
If a correct model checker asserts that the program does not satisfy the specification, it means that the program or the specification contained a bug.
In those cases, it can be very useful for the programmers to have access to an evidence by the model checker of the mismatch between the formula and the system as it is likely to lead them to the bug.

In this section we give a formal definition of what constitutes a \emph{proof} or a \emph{disproof} for the class of model checking problems we are interested in.
It is possible to relate the following definitions to the more general concept of \emph{tree-like} counterexamples used in model checking ACTL~\cite{ClarkeJLV2002}.
\begin{definition}
  \label{def:tree}
  An \emph{exploration tree} for a formula $\phi$ in a state $q$ is a tree with root $n$ associated with a pair $(q, \phi)$ with $q$ a state and $\phi$ a formula, such that $n$ satisfies the following properties.
  \begin{itemize}
  \item If $n$ is associated with $(q, p)$ with $p \in \Prop$, then it has no children;
  \item if $n$ is associated with $(q, \neg \phi)$ then $n$ has at most one child and it is an exploration tree associated with $(q, \phi)$;
  \item if a node $n$ is associated with $(q, \phi_1\wedge\phi_2)$ then any child of $n$ (if any) is an exploration tree associated with $(q, \phi_1)$ or with $(q, \phi_2)$;
  \item if a node $n$ is associated with $(q, \FORALL_a \phi)$ then any child of $n$ (if any) is an exploration tree associated with $(q^\prime, \phi)$ for some $q^\prime$ such that $q \xrightarrow{a} q^\prime$.
  \item In any case, no two children of $n$ are associated with the same pair.
  \end{itemize}
\end{definition}

Unless stated otherwise, we will not distinguish between a tree and its root node. 
In the rest of the paper, $n$, $n^\prime$, $n_1$, \dots will be used to denote nodes in exploration trees.

\begin{definition}
  \label{def:proof}
  A \emph{proof} (resp. a \emph{disproof}) that $q \models \phi$ is an exploration tree with a root $n$ associated with $(q, \phi)$ satisfying the following hypotheses.
  \begin{itemize}
  \item If $\phi = p$ with $p \in \Prop$, then $p \in \pi(q)$ (resp.~$p \notin \pi(q)$);
  \item if $\phi = \neg \phi^\prime$, then $n$ has exactly one child $n^\prime$ and this child is a disproof (resp.~proof);
  \item if $\phi = \phi_1 \wedge \phi_2$, then $n$ has exactly two children $n_1$ and $n_2$ such that $n_1$ is a proof that $q \models \phi_1$ and $n_2$ is a proof that $q\models \phi_2$ (resp.~$n$ has exactly one child $n^\prime$ and $n^\prime$ is a disproof that $q\models \phi_1$ or $n^\prime$ is a disproof that $q \models \phi_2$);
  \item if $\phi = \FORALL_a \phi^\prime$, then $n$ has exactly one child $n^\prime$ for each $q \xrightarrow{a} q^\prime$, and $n^\prime$ is a proof for $q^\prime \models \phi^\prime$ (resp.~$n$ has exactly one child $n^\prime$ and $n^\prime$ is a disproof for $q^\prime \models \phi^\prime$ for some $q \xrightarrow{a} q^\prime$).
  \end{itemize}
\end{definition}



\section{\acl{MPS}}
\label{sec:mps}
Let $q \models \phi$ be a model checking problem and $n_1$ and $n_2$ two proofs as defined in Sect.~\ref{sec:proofs}.  
Even if $n_1$ is not a subtree of $n_2$, there might be reasons to prefer, $n_1$ over $n_2$.  
For instance, we can imagine that $n_1$ contains fewer nodes than $n_2$, or that the depth of $n_1$ is smaller than that of $n_2$.

\subsection{Cost Functions}
\label{sec:cost}
To remain as general as possible with respect to the definitions of a \emph{small} (dis)proof in the introduction, we introduce a cost function $k$ as well as cost aggregators $A_\wedge$ and $A_{\FORALL}$.  
These functions can then be instantiated in a domain dependent manner to get the optimal algorithm for the domain definition of minimality.
This approach has been used before in the context of A* and AO*~\cite{Pearl1984}.

We assume given a \emph{base cost function} $k: \Prop \rightarrow \mathbb{R}^+$, as well as a \emph{conjunction cost aggregator} $A_\wedge : \mathbb{N}^{\mathbb{R}^+\cup\{\infty\}} \rightarrow \mathbb{R}^+\cup\{\infty\}$ and a \emph{box modal cost aggregator} $A_{\FORALL} : \Sigma \times \mathbb{N}^{\mathbb{R}^+\cup\{\infty\}} \rightarrow \mathbb{R}^+\cup\{\infty\}$,
where $\mathbb{N}^{\mathbb{R}^+\cup\{\infty\}}$ denotes the set of multisets of $\mathbb{R}^+\cup\{\infty\}$.

We assume the aggregators are increasing in the sense that adding elements to the input increases the cost.
For all costs $x\leq y\in \mathbb{R}^+\cup\{\infty\}$, multisets of costs $X \in \mathbb{N}^{\mathbb{R}^+\cup\{\infty\}}$, and for all agents $a$, we have for the conjunction cost aggregator $A_\wedge(X) \leq A_\wedge(\{x\} \cup X) \leq A_\wedge(\{y\}\cup X)$, and for the box aggregator $A_{\FORALL}(a,X) \leq A_{\FORALL}(a,\{x\} \cup X) \leq A_{\FORALL}(a,\{y\}\cup X)$.

We further assume that aggregating infinite costs results in infinite costs and that aggregating finite numbers of finite costs results in finite costs. 
For all costs $x\in\mathbb{R}^+$, multisets of costs $X\in\mathbb{N}^{\mathbb{R}^+\cup\{\infty\}}$, and for all agents $a$, $A_{\wedge}(\{\infty\}) = A_{\FORALL}(a,\{\infty\}) = \infty$ and that $A_{\wedge}(X) < \infty \Rightarrow A_{\wedge}(\{x\}\cup X) < \infty$ and $A_{\FORALL}(a, X) < \infty \Rightarrow A_{\FORALL}(a, \{x\}\cup X) < \infty$.

Note that in our presentation, there is no cost to a negation.  
The justification is that we want a proof aggregating over a disjunction to cost as much as a disproof aggregating over a conjunction with children of the same cost, without having to include the disjunction and the diamond operator in the base syntax.

Given $k$, $A_\wedge$, and $A_{\FORALL}$, it is possible to define the \emph{global cost function} for a (dis)proof as shown in Table~\ref{tab:cost}.
\begin{table}
  \centering
  \caption{Cost $K$ of a proof or a disproof for a node $n$ as a function of the base cost function $k$ and the aggregators $A_\wedge$ and $A_{\FORALL}$.
    $C$ is the set of children of $n$.}
  \label{tab:cost}
  \begin{tabular}{lrr}
    \toprule
    Label of $n$ & Children of $n$ & $K(n)$ \\
    \midrule
    $(q, p)$ & $\emptyset$ & $k(p)$ \\
    $(q, \neg \phi)$ & $\{c\}$ & $K(c)$ \\
    $(q, \phi_1 \wedge \phi_2)$ & $C$ & $A_\wedge(\{K(c)| c \in C\})$ \\
    $(q, \FORALL_a \phi)$ & $C$ & $A_{\FORALL}(a, \{K(c)| c \in C\})$ \\
    \bottomrule
  \end{tabular}
\end{table}

\begin{example}
  Suppose we are interested in the nested depth of the $\FORALL$ operators in the (dis)proof.
  Then we define $k = 0$, $A_\wedge = \max$, and $A_{\FORALL}(a, X) = 1 + \max X$ for all $a$.
\end{example}
\begin{example}
  Suppose we are interested in the number of atomic queries to the underlying system (the \ac{GA}).
  Then we define $k = 1$, $A_\wedge(X) = \sum X$, and $A_{\FORALL}(a, X) = \sum X$ for all $a$.
\end{example}
\begin{fullversion}
\begin{example}
  \label{ex:real}
  Suppose we are interested in minimizing the amount of expansive interactions with the underlying system.
  Then we define $A_\wedge(X) = \sum X$, and $A_{\FORALL}(a, X) = k_{\FORALL_a} + \sum X$ for all $a$.  
  In this case, we understand that $k(p)$ is the price for querying $p$ in any state, and $k_{\FORALL_a}$ is the price for getting access to the transition function for agent $a$ in any state. 
\end{example}
\end{fullversion}

We define two \emph{heuristic} functions $I$ and $J$ to estimate the minimal amount of interaction needed with the underlying system to say anything about a formula $\phi$.
These functions are defined in Table~\ref{tab:heuristic}, $I(\phi)$ is a lower bound on the minimal amount of interaction to prove $\phi$ and $J(\phi)$ is a lower bound on the minimal amount of interaction to disprove $\phi$.

\begin{table}
  \centering
  \caption{Definition of the heuristic functions $I$ and $J$.}
  \label{tab:heuristic}
  \begin{tabular}{lrrr}
    \toprule
    Shape of $\phi$ & $I(\phi)$ & & $J(\phi)$ \\
    \midrule
    $p$ & $k(p)$ & & $k(p)$ \\
    $\neg \phi^\prime$ & $J(\phi^\prime)$ & & $I(\phi^\prime)$ \\
    $\phi_1 \wedge \phi_2$ & $A_{\wedge}(\{I(\phi_1), I(\phi_2)\})$ & & $\min_{i\in\{1,2\}}A_{\wedge}(\{J(\phi_i)\})$ \\
    $\FORALL_a \phi^\prime$ & $A_{\FORALL}(a, \emptyset)$ & & $A_{\FORALL}(a,\{J(\phi^\prime)\})$ \\
    \bottomrule
  \end{tabular}
\end{table}

The heuristics $I$ and $J$ are \emph{admissible}, that is, they never overestimate the cost of a (dis)proof.

\begin{proposition}
\label{prop:admissibility}
Given a formula $\phi$, for any state $q$, for any proof $n$ that $q \models \phi$ (resp.~disproof), $I(\phi) \leq K(n)$ (resp.~$J(\phi) \leq K(n)$).
\end{proposition}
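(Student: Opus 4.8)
The plan is to prove both statements at once by structural induction on the formula $\phi$, since the negation case forces the proof and disproof claims to be intertwined: a proof of $\neg\phi'$ contains a disproof of $\phi'$ and vice versa. I would therefore carry a single induction whose hypothesis, for every strict subformula $\phi'$ and every state $q'$, asserts simultaneously that $I(\phi')\le K(n')$ for each proof $n'$ of $q'\models\phi'$ \emph{and} $J(\phi')\le K(n')$ for each disproof. Each inductive step then consults Definition~\ref{def:proof} to read off the shape of the (dis)proof tree, Table~\ref{tab:cost} to compute $K(n)$, and Table~\ref{tab:heuristic} to compute $I(\phi)$ or $J(\phi)$.

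For the base case $\phi=p$, both a proof and a disproof consist of a single childless node of cost $k(p)$, which matches $I(p)=J(p)=k(p)$ exactly, so equality holds. The negation step is immediate: a proof of $\neg\phi'$ is a single-child tree whose child is a disproof of $\phi'$ carrying the same cost (negation being free), and since $I(\neg\phi')=J(\phi')$ the induction hypothesis delivers the bound directly; the disproof direction is symmetric via $J(\neg\phi')=I(\phi')$.

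The two aggregating cases reduce to the monotonicity assumed for $A_\wedge$ and $A_{\FORALL}$. For a proof of $\phi_1\wedge\phi_2$ the children are proofs of $\phi_1$ and $\phi_2$, so applying the hypothesis componentwise and then the increasing property $A_\wedge(\{I(\phi_1),I(\phi_2)\})\le A_\wedge(\{K(n_1),K(n_2)\})$ gives $I(\phi_1\wedge\phi_2)\le K(n)$. For a disproof the single child disproves some $\phi_i$, and since $J(\phi_1\wedge\phi_2)$ is defined as a minimum over $i$, it is bounded by $A_\wedge(\{J(\phi_i)\})\le A_\wedge(\{K(n')\})=K(n)$ for whichever branch the disproof took. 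The box-disproof case is analogous, using $J(\FORALL_a\phi')=A_{\FORALL}(a,\{J(\phi')\})$ and monotonicity of $A_{\FORALL}$ on a singleton multiset.

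The step I expect to require the most care is the box-\emph{proof} case, because there the heuristic $I(\FORALL_a\phi')=A_{\FORALL}(a,\emptyset)$ aggregates the \emph{empty} multiset, whereas the actual proof $n$ must contain one child per successor $q\xrightarrow{a}q'$ and hence aggregates a possibly large multiset $\{K(c)\mid c\in C\}$ of child costs. The bound $A_{\FORALL}(a,\emptyset)\le A_{\FORALL}(a,\{K(c)\mid c\in C\})=K(n)$ therefore does not come from the induction hypothesis at all, but purely from the first half of the increasing assumption, $A_{\FORALL}(a,X)\le A_{\FORALL}(a,\{x\}\cup X)$, applied once for each child to build the full multiset up from $\emptyset$. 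This is exactly what licenses $I$ of a box to ignore the subformula $\phi'$ while remaining a valid lower bound. (If $\delta(q,a)$ may be infinite one additionally appeals to monotonicity in the limit, but under finite branching a finite number of applications suffices.)
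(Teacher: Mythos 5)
Your proof is correct and follows essentially the same route as the paper's: a structural induction on $\phi$ carrying the proof and disproof claims simultaneously, with identical base and negation cases and monotonicity of the aggregators doing the work in the compound cases. If anything, yours is the more faithful rendering of that route: the paper's written proof slips into the specific instantiation $K(n)=K(c_1)+K(c_2)$, $J(\phi)=\min\{J(\phi_1),J(\phi_2)\}$ in the conjunction case and dismisses the box case as ``very similar'' and omits it, whereas you keep the general aggregators throughout and correctly identify that the box-proof case is the one step that rests purely on the element-adding monotonicity $A_{\FORALL}(a,X)\leq A_{\FORALL}(a,\{x\}\cup X)$ (building up from $I(\FORALL_a\phi')=A_{\FORALL}(a,\emptyset)$) rather than on the induction hypothesis.
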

\begin{fullversion}
\begin{proof}
  We proceed by structural induction on the shape of formulas.
  For the base case $\phi = p$, if $n$ is a proof that $q \models p$, then the $n$ label of $n$ is $(q, p)$ and its cost is $K(n) = k(p)$, which is indeed greater or equal to $I(p) = J(p) = k(p)$.

  For the induction case, take the formulas $\phi_1$ and $\phi_2$ and assume that for any proofs (resp.~disproofs) $n_1$ and $n_2$, the cost is greater than the heuristic value: $I(\phi_1) \leq K(n_1)$ and $I(\phi_2) \leq K(n_2)$ (resp.~$J(\phi_1) \leq K(n_1)$ and $J(\phi_2) \leq K(n_2)$).
  
  For any proof (resp.~disproof) $n$ with label $(q, \neg \phi_a)$ and child $c$, the cost of $n$ is the cost of the disproof (resp.~proof) $c$: $K(n) = K(c)$.
  The disproof (resp.~proof) $c$ is associated with $(q, \phi_1)$ and we know from the induction hypothesis that $J(\phi_1) \leq K(c)$ (resp.~$I(\phi_1) \leq K(c)$).
  By definition of the heuristics, $I(\phi) = J(\phi_1)$ (resp.~$J(\phi) = I(\phi_1)$), therefore we have $I(\phi) \leq K(n)$ (resp.~$J(\phi) \leq K(n)$).

  For any proof (resp.~disproof) $n$ with label $(q, \phi_1 \wedge \phi_2)$ and children $c_1, c_2$ (resp.~child $c$), the cost of $n$ is the sum of the costs of the children: $K(n) = K(c_1)+K(c_2)$ (resp.~$K(n)=K(c)$).
  The nodes $c_1$ and $c_2$ are associated with $(q, \phi_1)$ and $(q, \phi_2)$ (resp.~$c$ is associated with $(q, \phi_1)$ or to $(q, \phi_2)$) and we know from the induction hypothesis that $I(\phi_1) \leq K(c_1)$ and $I(\phi_2) \leq K(c_2)$ (resp.~$J(\phi_1) \leq K(c)$ or $J(\phi_2) \leq K(c)$).
  By definition of the heuristics, $I(\phi) = I(\phi_1) + I(\phi_2)$ (resp.~$J(\phi) = \min\{J(\phi_1), J(\phi_2)\}$), therefore we have $I(\phi) \leq K(n)$ (resp.~$J(\phi) \leq K(n)$).

  The remaining case is very similar and is omitted.
  \qed
\end{proof}
\end{fullversion}
\begin{fullversion}
\begin{lemma}
  \label{lemma:heuristics}
  For any formula $\phi$, $I(\phi) < \infty$ and $J(\phi) < \infty$.
\end{lemma}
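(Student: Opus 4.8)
The plan is to prove both statements $I(\phi) < \infty$ and $J(\phi) < \infty$ simultaneously by structural induction on $\phi$, relying on the standing assumptions about the aggregators. Before starting the induction on formulas, I would isolate an auxiliary fact that is reused in every non-trivial case: for every agent $a$ and every \emph{finite} multiset $X$ of \emph{finite} costs (i.e.\ $X \in \mathbb{N}^{\mathbb{R}^+}$), we have $A_\wedge(X) < \infty$ and $A_{\FORALL}(a, X) < \infty$. This is exactly the informal promise that ``aggregating finite numbers of finite costs results in finite costs,'' and I would derive it formally by an inner induction on the cardinality $|X|$: the base case $X = \emptyset$ is the finiteness assumption applied to the empty multiset, and the inductive step $X = \{x\} \cup X'$ with $x \in \mathbb{R}^+$ follows from the assumed implications $A_\wedge(X') < \infty \Rightarrow A_\wedge(\{x\} \cup X') < \infty$ and the analogous one for $A_{\FORALL}$.

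With this auxiliary fact in hand, the structural induction is routine. For the base case $\phi = p$, we have $I(p) = J(p) = k(p)$, which is finite because $k$ takes values in $\mathbb{R}^+$. For $\phi = \neg \phi'$, the definitions give $I(\phi) = J(\phi')$ and $J(\phi) = I(\phi')$, so both quantities are finite directly from the induction hypothesis applied to $\phi'$. For $\phi = \phi_1 \wedge \phi_2$, the induction hypothesis gives that $I(\phi_1), I(\phi_2), J(\phi_1), J(\phi_2)$ are all finite; then $I(\phi) = A_\wedge(\{I(\phi_1), I(\phi_2)\})$ and each $A_\wedge(\{J(\phi_i)\})$ are aggregates of finite multisets of finite costs, hence finite by the auxiliary fact, and $J(\phi)$ is a minimum of two finite values. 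For $\phi = \FORALL_a \phi'$, $I(\phi) = A_{\FORALL}(a, \emptyset)$ is finite by the base case of the auxiliary fact, while $J(\phi) = A_{\FORALL}(a, \{J(\phi')\})$ aggregates the singleton $\{J(\phi')\}$, finite by the induction hypothesis, and is therefore finite as well.

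The only genuinely delicate point is the base case of the auxiliary fact, namely $A_\wedge(\emptyset) < \infty$ and $A_{\FORALL}(a, \emptyset) < \infty$. The formally stated assumptions provide only the inductive implication $A_\wedge(X) < \infty \Rightarrow A_\wedge(\{x\} \cup X) < \infty$, which on its own never bottoms out, so the finiteness of the empty aggregation must be read off from the informal phrasing of the assumption (aggregating a finite, possibly empty, collection of finite costs yields a finite cost). I would therefore make this reading explicit at the outset, since $I(\FORALL_a \phi') = A_{\FORALL}(a, \emptyset)$ depends on it and it is the seed of the whole argument; everything else is a mechanical consequence of the finiteness-propagation assumptions.
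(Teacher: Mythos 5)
Your proposal is correct and follows essentially the same route as the paper's (very terse) proof: structural induction on $\phi$, with the base case from $k$ having range $\mathbb{R}^+$ and the inductive cases from the assumptions on the aggregators. Your additional observation is a genuine refinement the paper glosses over: the formally stated implications $A_{\wedge}(X) < \infty \Rightarrow A_{\wedge}(\{x\}\cup X) < \infty$ and $A_{\FORALL}(a,X) < \infty \Rightarrow A_{\FORALL}(a,\{x\}\cup X) < \infty$ indeed never bottom out on their own, so the seed $A_{\wedge}(\emptyset) < \infty$ and $A_{\FORALL}(a,\emptyset) < \infty$ (needed in particular for $I(\FORALL_a \phi') = A_{\FORALL}(a,\emptyset)$) must be extracted from the informal phrasing ``aggregating finite numbers of finite costs results in finite costs,'' exactly as you do.
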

\begin{proof}
  We proceed by structural induction on $\phi$.
  For the base case, $\phi = p$, simply recall that the range of $k$ is $\mathbb{R}^+$.
  The induction case results directly from the assumptions on the aggregators.
\end{proof}
\end{fullversion}
\subsection{Best First Search framework}
We inscribe the \ac{MPS} algorithm in a best first search framework inspired by game tree search.
We then specify a function for initializing the leaves, a function to update tree after a leaf has been expanded, a selection function to decide which part of the tree to expand next, and a stopping condition for the overall algorithm.

\begin{algorithm2e}
  \SetKw{Skip}{skip}
  \SetKwFunction{Bfs}{bfs}
  \SetKwFunction{BackProp}{backpropagate}
  \SetKwFunction{Update}{update}
  \SetKwFunction{Extend}{extend}
  \SetKwFunction{Select}{select-child}
  \SetKwFunction{Init}{init-leaf}
  \SetKwFunction{InitTerminal}{info-term}

  \Bfs{state $q$, formula $\phi$}\\
  $r$ $\leftarrow$ new node with label $(q, \phi)$\;
  $r$.info $\leftarrow$ \Init{$r$}\;
  $n$ $\leftarrow$ $r$\;
  \While{$r$ is not solved}{
    \While{$n$ is not a leaf}{
      $n$ $\leftarrow$ \Select{$n$}\;
    }
    \Extend{$n$}\;
    $n$ $\leftarrow$ \BackProp{$n$}\;
  }
  \Return{$r$}

  \BlankLine
  \BlankLine
  \Extend{node $n$}\\
  \Switch{on the label of $n$}{
    \Case{$(q, p)$}{
      $n$.info $\leftarrow$ \InitTerminal{$n$}\;
    }
    \Case{$(q, \phi_1 \wedge \phi_2)$}{
      $n_1$ $\leftarrow$ new node with label $(q, \phi_1)$\;
      $n_2$ $\leftarrow$ new node with label $(q, \phi_2)$\;
      $n_1$.info $\leftarrow$ \Init{$n_1$}\;
      $n_2$.info $\leftarrow$ \Init{$n_2$}\;
      Add $n_1$ and $n_2$ as children of $n$\;
    }
    \Case{$(q, \neg\phi_1)$}{
      $n^\prime$ $\leftarrow$ new node with label $(q, \phi_1)$\;
      $n^\prime$.info $\leftarrow$ \Init{$n^\prime$}\;
      Add $n^\prime$ as a child of $n$\;
    }
    \Case{$(q, \FORALL_a\phi_1)$}{
      \ForEach{$q^{\prime}$ in $\{q^{\prime}, q\xrightarrow{a}q^{\prime}\}$}{
        $n^\prime$ $\leftarrow$ new node with label $(q^\prime, \phi_1)$\;
        $n^\prime$.info $\leftarrow$ \Init{$n^\prime$}\;
        Add $n^\prime$ as child of $n$\;
      }
    }
  }

  \BlankLine
  \BlankLine
  \BackProp{node $n$}\\
  new\_info $\leftarrow$ \Update{$n$}\;
  \lIf{new\_info = $n$.info $\vee$ $n$ = $r$}{
    \Return{$n$}\;
  }\Else{
    $n$.info $\leftarrow$ new\_info\;
    \Return{\BackProp{$n$.parent}}
  }

  \BlankLine
  \caption[Best-first search algorithm]{Generic pseudo-code for a best-first search algorithm.}
  \label{alg:bfs}
\end{algorithm2e}

Algorithm~\ref{alg:bfs} develops an exploration tree for a given state $q$ and formula $\phi$.  
To be able to orient the search efficiently towards proving or disproving the model checking problem $q \models \phi$ instead of just exploring, we need to attach additional information to the nodes beyond their $($state, formula$)$ label.
This information takes the form of two \emph{effort} numbers, called the \emph{minimal proof number} and \emph{minimal disproof number}.
Given a node $n$ associated with a pair $(q, \phi)$, the minimal proof number of $n$, $\MPN(n)$, is an indication on the cost of a proof for $q \models \phi$.  
Conversely, the minimal disproof number of $n$, $\MDN(n)$, is an indication on the cost of a disproof for $q \models \phi$.
For a more precise relationship between $\MPN(n)$ and the cost of a proof see Prop.~\ref{prop:lower-bound}.

The algorithm stops when the minimal (dis)proof number reaches $\infty$ as it corresponds to the exploration tree containing a (dis)proof of minimal cost (see Prop.~\ref{prop:solving}).

\begin{table}
  \centering
  \caption{Values for terminal nodes and initial values for leaves.}
  \label{tab:init}
  \begin{tabular}{llrr}
    \toprule
    & Node label & $\MPN$ & $\MDN$ \\
    \midrule
    \multirow{2}{*}{\texttt{info-term}}    & $(q, p)$ where $p \in \pi(q)$  & $k(p)$ & $\infty$\\
    & $(q, p)$ where $p \notin \pi(q)$  & $\infty$ & $k(p)$ \\
    \cmidrule{1-4}
    \texttt{init-leaf} & $(q, \phi)$ & $I(\phi)$ & $J(\phi)$\\
    \bottomrule
  \end{tabular}
\end{table}

The values for the effort numbers in terminal leaves and in newly created leaves are defined in Table~\ref{tab:init}.
The values for the effort numbers of an internal node as a function of its children are defined in Table~\ref{tab:heredity}.
Finally, the selection procedure base on the effort numbers to decide how to descend the global tree is given in Table~\ref{tab:selection}. 
The stopping condition, Table~\ref{tab:init},~\ref{tab:heredity}, and~\ref{tab:selection}, as well as Alg.~\ref{alg:bfs} together define \acl{MPS}.

\begin{table}
  \centering
  \caption{Determination of values for internal nodes.}
  \label{tab:heredity}
  \begin{tabular}{lrrr}
    \toprule
    Node label & Children  &  $\MPN$ & $\MDN$ \\
    \midrule
    $(q, \neg \phi)$      & $\{c\}$ & $\MDN(c)$ & $\MPN(c)$ \\
    $(q, \phi_1 \wedge \phi_2)$    &     $C$ & $A_{\wedge}(\{\MPN(c)|c\in C\})$ & $\min_CA_{\wedge}(\{\MDN\})$ \\
    $(q, \FORALL_a\phi)$ &     $C$ & $A_{\FORALL}(a,\{\MPN(c)|c\in C\})$ & $\min_CA_{\FORALL}(a,\{\MDN\})$ \\
    \bottomrule
  \end{tabular}
\end{table}

\begin{table}[tbp]
  \centering
  \caption{Selection policy.}
  \label{tab:selection}
  \begin{tabular}{lrr}
    \toprule
    Node label & Children  &  Chosen child \\
    \midrule
    $(q, \neg \phi)$      & $\{c\}$ & $c$ \\
    $(q, \phi_1 \wedge \phi_2)$    &     $C$ & $\argmin_CA_{\wedge}(\{\MDN\})$  \\
    $(q, \FORALL_a \phi)$ &     $C$ & $\argmin_CA_{\FORALL}(a,\{\MDN\})$ \\
    \bottomrule
  \end{tabular}
\end{table}

The \texttt{backpropagate} procedure implements a small optimization known as the \emph{current node enhancement}~\cite{AllisvdMvdH1994}.
Basically, if the information about a node $n$ are not changed, then the information about the ancestors of $n$ will not change either and so the next descend will reach $n$.
Thus, it is possible to shortcut the process and start the next descent at $n$ directly.

\section{Properties of \acs{MPS}}
\label{sec:theory}
Before studying some theoretical properties of (dis)proofs, minimal (dis)proof numbers, and \ac{MPS},
let us point out that for any exploration tree, not necessarily produced by \ac{MPS}, we can associate to each node an $\MPN$ and an $\MDN$ by using the initialization described in Table~\ref{tab:init} and the heredity rule described in Table~\ref{tab:heredity}.

\subsection{Correctness of the Algorithm}
The first property we want to prove about \ac{MPS} is that the descent does not get stuck in a solved subtree.
\begin{proposition}
  For any internal node $n$ with finite effort numbers, the child $c$ selected by the procedure described in Table~\ref{tab:selection} has finite effort numbers.
 $\MPN(n) \neq \infty$ and $\MDN(n) \neq \infty$ imply $\MPN(c) \neq \infty$ and $\MDN(c) \neq \infty$.
\end{proposition}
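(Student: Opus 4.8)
The plan is to proceed by a case analysis on the label of the internal node $n$, mirroring the three cases of Table~\ref{tab:heredity} (negation, conjunction, and box modality). In each case I read off the finiteness of the selected child's effort numbers from the heredity rule of Table~\ref{tab:heredity} together with the structural assumptions on the aggregators.

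Before the case analysis, I would isolate two small facts about each aggregator that do the real work. First, a finite aggregate forces its inputs to be finite in the relevant positions: since $A_\wedge(\{\infty\}) = \infty$ (and likewise $A_{\FORALL}(a, \{\infty\}) = \infty$), the singleton aggregate $A_\wedge(\{x\})$ is finite only when $x$ is finite. Second, and more importantly, aggregating any multiset that contains $\infty$ yields $\infty$: starting from $A_\wedge(\{\infty\}) = \infty$ and repeatedly applying the monotonicity inequality $A_\wedge(X) \leq A_\wedge(\{y\}\cup X)$ while adding the remaining elements gives $A_\wedge(\{\infty\}\cup X) \geq A_\wedge(\{\infty\}) = \infty$, and symmetrically for $A_{\FORALL}$. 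These two observations are the crux; the rest is bookkeeping.

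For the negation case, where $n$ has the single child $c$, the result is immediate: the heredity rule gives $\MPN(n) = \MDN(c)$ and $\MDN(n) = \MPN(c)$, so the hypotheses $\MPN(n) \neq \infty$ and $\MDN(n) \neq \infty$ directly yield $\MDN(c) \neq \infty$ and $\MPN(c) \neq \infty$, and $c$ is the selected child. For the conjunction case I would treat the two effort numbers of the selected child $c^\ast = \argmin_C A_\wedge(\{\MDN\})$ separately. Since $\MPN(n) = A_\wedge(\{\MPN(c)\mid c\in C\})$ is finite, the second fact above forbids any child from having infinite $\MPN$; in particular $\MPN(c^\ast) \neq \infty$, regardless of which child is selected. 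Since $\MDN(n) = \min_C A_\wedge(\{\MDN\}) = A_\wedge(\{\MDN(c^\ast)\})$ is finite, the first fact forces $\MDN(c^\ast) \neq \infty$. The box case is identical after replacing $A_\wedge$ by $A_{\FORALL}(a, \cdot)$ throughout.

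The only subtle point, and the step I would write out most carefully, is the $\infty$-absorption property of the aggregators, because the paper asserts $A_\wedge(\{\infty\}) = \infty$ only for the singleton and states finiteness preservation only in the direction ``finite stays finite.'' The absorption property therefore has to be recovered from monotonicity rather than assumed, and it is exactly what guarantees that a finite $\MPN(n)$ propagates finiteness to \emph{every} child, not merely to the one picked by the selection policy.
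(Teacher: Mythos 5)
Your proof is correct and follows essentially the same route as the paper's: a case split on the node label, with the negation case read directly off the heredity rule, and the conjunction/box cases combining (i) finiteness of $\MPN(n)$ forcing every child's $\MPN$ to be finite and (ii) finiteness of $\MDN(n) = \min_C A_\wedge(\{\MDN\})$ forcing the $\argmin$ child's $\MDN$ to be finite. Your explicit derivation of the $\infty$-absorption property from monotonicity (the paper only postulates $A_\wedge(\{\infty\})=\infty$ for singletons and uses the multiset version implicitly) is a worthwhile extra precaution, but it does not change the substance of the argument.
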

\begin{fullversion}
\begin{proof}
  If the formula associated with $n$ has shape $\neg \phi$, then $\MDN(c) = \MPN(n) \neq \infty$ and $\MPN(c) = \MDN(n) \neq \infty$.
  
  If the formula associated with $n$ is a conjunction, then it suffices to note that no child of $n$ has an infinite minimal proof number and at least one child has a finite minimal disproof number, and the result follows from the definition of the selection procedure.  
  This also holds if the formula associated with $n$ is of the form $\FORALL_a \phi^\prime$.
  \qed
\end{proof}
\end{fullversion}
As a result, each descent ends in a non solved leaf.
Either the associated formula is of the form $p$ and the leaf gets solved, or the leaf becomes an internal node and its children are associated with structurally smaller formulas.

\begin{proposition}
  The \ac{MPS} algorithm terminates in a finite number of steps.
\end{proposition}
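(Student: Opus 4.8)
The plan is to bound the total size of the exploration tree that \ac{MPS} can possibly construct and then to show that each iteration of the outer \texttt{while} loop permanently consumes one of its nodes, so that only finitely many iterations are possible.

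First I would argue that the maximal exploration tree for $(q,\phi)$ is finite, under the standing assumption (implicit in the fact that the \texttt{extend} procedure loops over the successors $q \xrightarrow{a} q^\prime$) that $\delta(q,a)$ is finite for every state $q$ and agent $a$. For the depth, define the structural complexity $|\phi|$ as the number of operators in $\phi$ and observe from Definition~\ref{def:tree} that along every parent-to-child edge the formula component is replaced by a strict subformula: $\neg\phi^\prime$ yields $\phi^\prime$, $\phi_1\wedge\phi_2$ yields $\phi_1$ or $\phi_2$, and $\FORALL_a\phi^\prime$ yields $\phi^\prime$. Hence $|\cdot|$ strictly decreases along every branch and the depth is bounded by $|\phi|$. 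For the branching, each node has at most one child in the negation case, at most two in the conjunction case, and at most $|\delta(q,a)|$ children in the box case, which is finite by assumption. A tree of bounded depth and finite branching is finite; let $N$ be its node count, a quantity depending only on $(q,\phi)$ and the \ac{GA}.

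Next I would establish that every pass through the inner \texttt{while} loop ends at an \emph{unexpanded} leaf and that \texttt{extend} is applied to each node at most once. By Lemma~\ref{lemma:heuristics} a freshly initialised leaf has both effort numbers finite, whereas \texttt{info-term} sets exactly one of $\MPN$, $\MDN$ to $\infty$; thus a leaf has two finite effort numbers precisely when it is still unexpanded. While the loop runs the root is not solved, so both of its effort numbers are finite, and by the preceding proposition the selection procedure only ever descends to children with finite effort numbers. Consequently the descent never enters an already-solved terminal leaf, and since it halts only at leaves it never revisits a node that has already been turned into an internal node. It therefore always reaches an unexpanded leaf, which \texttt{extend} either solves (the $(q,p)$ case) or converts into an internal node whose associated formulas are structurally smaller; in both cases the node ceases to be an unexpanded leaf forever.

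Finally I would combine the two steps: each outer iteration applies \texttt{extend} to a distinct node of the maximal exploration tree, and there are at most $N$ such nodes, so the loop executes at most $N$ times and the algorithm terminates. The \emph{current node enhancement} in \texttt{backpropagate} only alters the node from which the next descent resumes; since a just-extended node either becomes internal or has its information changed, \texttt{backpropagate} returns a proper ancestor and the following descent again reaches a fresh leaf below it, leaving the counting argument intact. The main obstacle is the second step: one must rule out any repeated expansion of the same node, which hinges on invoking the preceding proposition to guarantee that the effort-number-guided descent never selects a solved child—and hence never returns to a terminal leaf—together with the observation that expanded internal nodes are no longer valid descent targets.
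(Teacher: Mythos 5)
Your proof is correct, but it takes a genuinely different route from the paper's. You bound the maximal exploration tree explicitly (depth at most $|\phi|$ because the formula component strictly decreases along every edge, finite branching because $\delta(q,a)$ is finite and the negation/conjunction cases give at most one or two children) and then show that each outer iteration permanently consumes one unexpanded leaf, so at most $N$ iterations occur. The paper instead looks, at each time step, at the list of formulas attached to the non-solved leaves, sorted by decreasing structural complexity, and argues that every expansion strictly decreases this list in a well-founded lexicographical ordering, so no infinite run exists. Both arguments rest on the same two ingredients: the preceding proposition, which guarantees that every descent ends at a non-solved leaf (the paper records this as a remark just before the termination proposition; you derive it via the observation that a leaf has two finite effort numbers exactly when it is unexpanded), and the finite-branching assumption, which you commendably make explicit while the paper leaves it implicit in the finiteness of its lists. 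Your approach buys a concrete quantitative bound ($N$ expansions, where $N$ depends only on $(q,\phi)$ and the \ac{GA}) and avoids well-founded induction; the paper's buys freedom from constructing any a priori global tree bound. One small slip in your treatment of the current node enhancement: \texttt{backpropagate} need not return a \emph{proper} ancestor of the just-extended node. For a conjunction (or negation) node, the aggregated values of the freshly initialized children coincide exactly with that node's own initial values $I$ and $J$, so its info is unchanged and \texttt{backpropagate} returns the node itself. This does not damage your argument, since the returned node is then internal with finite effort numbers, so the next descent still terminates at a fresh unexpanded leaf strictly below it and the counting goes through.
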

\begin{fullversion}
  \begin{proof}
    Let $F$ be the set of lists of formulas ordered by decreasing structural complexity, that is, $F = \{l = (\phi_0, \dots, \phi_n)| n \in \mathbb{N}, \phi_0 \geq \dots \geq \phi_n\}$.
    Note that the lexicographical ordering (based on structural complexity) $<_F$ is wellfounded on $F$.
    Recall that there is no infinite descending chains with respect to a well-founded relation.

    Consider at some time $t$ the list $l_t$ of formulas associated with the non solved leaves of the tree. 
    Assuming that $l_t$ is ordered by decreasing structural complexity, we have $l_t \in F$.
    Observe that a step of the algorithm results in a list $l_{t+1}$ smaller than $l_t$ according to the lexicographical ordering and that successive steps of the algorithm result in a descending chain in $F$.
    Conclude that the algorithm terminates after a finite number of steps for any input formula $\phi$ with associated list $l_0 = (\phi)$.
    \qed
  \end{proof}
\end{fullversion}

Since the algorithm terminates, we know that the root of the tree will eventually be labelled with a infinite minimal (dis)proof number and thus will be \emph{solved}.
It remains to be shown that this definition of a solved tree coincides with containing (dis)proof starting at the root.

\begin{proposition}
  \label{prop:solving} 
  If a node $n$ is associated with $(q, \phi)$, then $\MDN(n) = \infty$ (resp. $\MPN(n) = \infty$) if and only if the tree corresponding to $n$ contains a proof (resp.~disproof) that $q \models \phi$ as a subtree with root $n$.
\end{proposition}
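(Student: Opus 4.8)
The plan is to prove both biconditionals at once---the one relating $\MDN$ to proof-containment and the companion statement relating $\MPN$ to disproof-containment---by structural induction on the subtree rooted at $n$, since the negation case forces each statement to feed into the other. Before starting the induction I would isolate the only arithmetic fact I need about the aggregators: for $A_\wedge$ and for each $A_{\FORALL}(a,\cdot)$, a multiset aggregates to $\infty$ if and only if it contains $\infty$. The ``only if'' direction is exactly the assumption that aggregating finitely many finite costs stays finite (bootstrapped from the empty multiset), and the ``if'' direction follows from monotonicity together with $A_\wedge(\{\infty\}) = A_{\FORALL}(a,\{\infty\}) = \infty$. I would also record that the minimum over the empty multiset is $\infty$, which is what makes the heredity rules of Table~\ref{tab:heredity} behave correctly at a $\FORALL_a$ node over a state with no $a$-successors.

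For the base case I read the effort numbers of a terminal leaf $(q,p)$ directly off Table~\ref{tab:init}. If $p \in \pi(q)$ then $\MDN = \infty$ and the single node is, by the base clause of Definition~\ref{def:proof}, already a proof that $q \models p$; meanwhile $\MPN = k(p) < \infty$ and no disproof can exist, matching the $\MPN$ statement. The case $p \notin \pi(q)$ is symmetric. For an un-expanded non-atomic leaf I use Lemma~\ref{lemma:heuristics} to note that $\MPN = I(\phi)$ and $\MDN = J(\phi)$ are both finite, while a childless node whose formula is $\neg\phi'$, $\phi_1 \wedge \phi_2$, or $\FORALL_a \phi'$ (over a state with $a$-successors) is neither a proof nor a disproof, since each such shape requires at least one child in Definition~\ref{def:proof}.

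The inductive step splits on the shape of $\phi$. Negation is immediate from $\MDN(n) = \MPN(c)$ and $\MPN(n) = \MDN(c)$: a proof of $\neg\phi'$ is precisely a node whose child is a disproof of $\phi'$, so the crossed induction hypothesis closes both statements simultaneously. For a conjunction node---fully expanded by \texttt{extend}, hence carrying both the $\phi_1$- and the $\phi_2$-child---the $\infty$-dichotomy gives that $\MDN(n)$, the minimum over the two children of $A_\wedge(\{\MDN(c)\})$, equals $\infty$ exactly when \emph{both} children have $\MDN = \infty$, that is (induction hypothesis) when both children contain proofs, that is when $n$ contains a proof of $\phi_1 \wedge \phi_2$; dually, $\MPN(n)$, the aggregate $A_\wedge$ of the children's $\MPN$ values, equals $\infty$ exactly when \emph{some} child has $\MPN = \infty$, matching the fact that a disproof of the conjunction needs only one disproved conjunct. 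The $\FORALL_a$ case is handled identically, with the child-per-successor structure of a $\FORALL_a$-proof playing the role that the two-children structure played for conjunction, and the empty-successor subcase covered by $\min \emptyset = \infty$ together with the vacuously childless proof.

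The delicate point, and the one I would treat most carefully, is the interplay between partial expansion and the universal/existential asymmetry of Definition~\ref{def:proof}. A proof of a conjunction or of a $\FORALL_a$ formula must contain a child for \emph{every} conjunct or successor, whereas the corresponding disproof needs only \emph{one}; the equivalence therefore genuinely depends on the fact that \texttt{extend} creates all of a node's children at once, so that an internal conjunction node never carries just a single conjunct and an internal $\FORALL_a$ node never omits a reachable successor. I would state this expansion-completeness invariant explicitly and check that it is preserved by the algorithm, since without it $\MDN(n) = \infty$ could hold at a half-expanded conjunction that contains no proof. The remaining corner case---an as-yet-unexpanded $\FORALL_a$ leaf over a successorless state, which is a vacuous proof yet receives a finite $J$-value---I would dispose of by treating the discovery that $\delta(q,a) = \emptyset$ as part of expansion, so that such a node is always regarded as the expanded $\FORALL_a$ node of Table~\ref{tab:heredity} with $\MDN = \infty$.
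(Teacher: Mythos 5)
Your proof follows the same route as the paper's: a simultaneous structural induction on the tree, with the negation case crossing the two statements into each other, and the conjunction and box cases resting on the fact that an aggregator yields $\infty$ exactly when its input multiset contains $\infty$ (the paper invokes this assumption in a single closing sentence; you isolate it as a standalone lemma, which is cleaner but not a different argument). Where you genuinely diverge---and improve on the paper---is the base case. The paper's proof asserts that a childless node whose formula is not atomic ``does not contain a proof nor a disproof,'' but that claim is false when the formula is $\FORALL_a \phi'$ and $q$ has no $a$-successors: by Definition~\ref{def:proof} such a node is, vacuously, a proof all by itself, while Table~\ref{tab:init} assigns it the finite disproof number $J(\FORALL_a\phi') = A_{\FORALL}(a,\{J(\phi')\})$, so the ``contains a proof $\Rightarrow \MDN = \infty$'' direction fails for such leaves under the paper's reading. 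Your repair---stipulating that discovering $\delta(q,a)=\emptyset$ counts as expansion, so a childless $\FORALL_a$ node is always scored by Table~\ref{tab:heredity} (where the minimum over an empty set of children is $\infty$) rather than by Table~\ref{tab:init}---is exactly what is needed to make the proposition true as stated. Your other point of care, the expansion-completeness invariant (a conjunction or box node acquires all its children in one call to \texttt{extend}), is likewise load-bearing for the $\Leftarrow$ direction and is left entirely implicit in the paper. In short: same induction, but your version closes a real hole in the paper's own base case.
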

\begin{fullversion}
  \begin{proof}
    We proceed by structural induction on the shape of trees.

    For the base case when $n$ has no children, either $\phi = p$ or $\phi$ is not atomic.
    In the first case, $n$ is a terminal node so contains a (dis)proof ($n$ itself) and we obtain the result by definition of $\MPN$ nad $\MDN$ as per Table~\ref{tab:init}.
    In the second case, $\phi$ is not atomic and $n$ has no children so $n$ does not contain a proof nor a disproof.
    Table~\ref{tab:init} and Lemma~\ref{lemma:heuristics} show that the effort numbers are both finite.

    For the induction case when $\phi = \neg \phi^\prime$, we know that $n$ has one child $c$ associated to $\phi^\prime$.
    If $c$ contains a proof (resp.~disproof) that $q \models \phi^\prime$, then $n$ contains a disproof (resp.~proof) that $q \models \phi$. 
    By induction hypothesis, we know that $\MPN(c) = \infty$ (resp.~$\MDN(c) = \infty$) therefore, using Table~\ref{tab:heredity}, we know that $\MDN(n) = \infty$ (resp.~$\MPN(n) = \infty$).
    Conversely if $c$ does not contain a proof nor a disproof, then $n$ does not contain a proof nor a disproof, and we know from the induction hypothesis and Table~\ref{tab:heredity} than $\MPN(n)=\MDN(c) < \infty$ and $\MDN(n)=\MPN(c)<\infty$.

    The other induction cases are similar but make use of the assumption that aggregating inifinite costs results in infinite costs and that aggregating finite numbers of finite costs results in finite costs.
    \qed
  \end{proof}
\end{fullversion}
\begin{theorem}
  \label{thm:correctness}
  The \ac{MPS} algorithm takes a formula $\phi$ and a state $q$ as arguments and returns after a finite number of steps an exploration tree that contains a (dis)proof that $q \models \phi$.
\end{theorem}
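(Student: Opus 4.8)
The plan is to obtain the theorem as a direct consequence of the supporting results already established in this subsection, assembling them around the control structure of Alg.~\ref{alg:bfs}. There are two claims to discharge: (i) termination after finitely many steps, and (ii) that the tree returned at the root $r$ contains a proof or a disproof that $q \models \phi$.

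For (i), I would simply invoke the termination proposition stated above, which via a well-founded lexicographic ordering on the lists of formulas associated with the non-solved leaves shows that each iteration of the outer \texttt{while} loop strictly decreases the measure, so that only finitely many iterations can occur. This already yields the ``finite number of steps'' part of the statement verbatim.

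For (ii), the key observation is that the outer \texttt{while} loop in \texttt{bfs} has a single exit condition, namely that $r$ becomes \emph{solved}; and by the stopping condition this means precisely $\MPN(r) = \infty$ or $\MDN(r) = \infty$. Since (i) guarantees the loop does exit, the root $r$ returned by the algorithm must be solved in this sense. I would then apply Prop.~\ref{prop:solving} to $r$, which is associated with $(q, \phi)$: if $\MDN(r) = \infty$ the tree rooted at $r$ contains a proof that $q \models \phi$, while if $\MPN(r) = \infty$ it contains a disproof. In either case the returned tree contains a (dis)proof, which is exactly the desired conclusion.

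The step requiring the most care---though not genuinely difficult---is making the bridge between algorithmic termination and the semantic conclusion precise: one must argue that termination of the whole procedure is equivalent to the outer loop's guard failing (there is no other exit point such as a \texttt{break}), and that this guard failing is exactly the $\infty$-condition characterized by Prop.~\ref{prop:solving}. Once those identifications are spelled out, the theorem follows with no further computation, since all the real work---admissibility of $I$ and $J$, finiteness of the heuristics, the fact that the descent never stalls in a solved subtree via the selection procedure, and the equivalence between solved nodes and embedded (dis)proofs---has been carried out in the preceding propositions and lemma.
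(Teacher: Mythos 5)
Your proposal is correct and matches the paper's own (implicit) argument: the paper presents Theorem~\ref{thm:correctness} as the direct combination of the termination proposition (well-founded lexicographic descent on the lists of formulas at non-solved leaves) with Prop.~\ref{prop:solving}, exactly as you assemble it. Your additional care in spelling out the bridge---that the loop's only exit condition is $r$ being solved, i.e.\ $\MPN(r) = \infty$ or $\MDN(r) = \infty$, which is precisely the condition Prop.~\ref{prop:solving} characterizes---is a faithful elaboration of what the paper leaves implicit rather than a different route.
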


\subsection{Minimality of the (Dis)Proofs}
Now that we know that \ac{MPS} terminates and returns a tree containing a (dis)proof, we need to prove that this (dis)proof is of minimal cost.

The two following propositions can be proved by a simple structural induction on the exploration tree, using Table~\ref{tab:init} and the admissibility of $I$ and $J$ for the base case and Table~\ref{tab:heredity} for the inductive case.
\begin{proposition}
  \label{prop:cost}
  If a node $n$ is solved, then the cost of the contained (dis)proof is given by the minimal (dis)proof number of $n$.
\end{proposition}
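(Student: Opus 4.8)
The plan is to prove, by structural induction on the exploration tree rooted at $n$, a two-part statement that couples the proof and disproof sides: if $\MDN(n)=\infty$ (so that $n$ contains a proof by Prop.~\ref{prop:solving}), then $\MPN(n)$ equals the minimal cost $K(m)$ over all proofs $m$ contained in the subtree; and dually, if $\MPN(n)=\infty$, then $\MDN(n)$ equals the minimal cost over all contained disproofs. The two halves must be carried together because the negation rule swaps them: for a node labelled $(q,\neg\phi)$ with $\MDN(n)=\infty$, Table~\ref{tab:heredity} gives $\MPN(n)=\MDN(c)$ for the unique child $c$, and $\MDN(n)=\infty$ forces $\MPN(c)=\infty$, so $c$ is disproof-solved and the disproof hypothesis applies to it; since a negation carries no cost, a minimal proof of $\neg\phi$ is exactly a minimal disproof of $\phi$ sitting under $n$, and the equality propagates. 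For the base case I would read off Table~\ref{tab:init}: a solved leaf is terminal with label $(q,p)$, its contained (dis)proof is the single node $n$ of cost $k(p)$, which matches $\MPN(n)$ when $p\in\pi(q)$ and $\MDN(n)$ when $p\notin\pi(q)$; an unexpanded leaf with a non-atomic formula has both effort numbers finite by Lemma~\ref{lemma:heuristics}, hence is not solved and the claim is vacuous there.

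On the proof side of conjunction and box the argument is direct. A proof-solved node is necessarily fully expanded, since Def.~\ref{def:proof} requires a child for each conjunct (resp.\ for each $a$-successor); moreover $\MDN(n)=\min_C A_\wedge(\{\MDN\})=\infty$ forces every child to have $\MDN(c)=\infty$, i.e.\ every child is itself proof-solved. The induction hypothesis then identifies each $\MPN(c)$ with the minimal contained proof cost of $c$, and because a minimal proof of the node is obtained by attaching a minimal proof of every child, monotonicity of $A_\wedge$ (resp.\ $A_{\FORALL}$) shows the minimal total cost is $A_\wedge(\{\MPN(c)\mid c\in C\})$ (resp.\ its $A_{\FORALL}$ analogue), which is exactly the heredity value $\MPN(n)$.

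The delicate cases are the disproof side of conjunction and box, where $\MDN(n)=\min_{c\in C}A_\wedge(\{\MDN(c)\})$ (resp.\ with $A_{\FORALL}$). A minimal contained disproof passes through a single child, so its cost equals the minimum of $A_\wedge(\{\MDN(c)\})$ taken over those children that are themselves disproof-solved, and for such children the induction hypothesis makes $\MDN(c)$ the minimal contained disproof cost. The main obstacle is therefore to show that the minimum in the heredity rule is actually attained at a solved child, rather than being lowered by some sibling that is only an unexpanded leaf whose $\MDN$ is merely the heuristic value $J$. I expect to discharge this using the structure of MPS-produced solved trees together with the admissibility of $J$ (Prop.~\ref{prop:admissibility}): the selection policy of Table~\ref{tab:selection} follows the $\argmin$ branch and keeps refining it until it becomes solved, so in a tree whose root is solved no abandoned sibling can undercut the realized minimal disproof, while admissibility certifies that the heuristic estimates on those siblings are genuine lower bounds and hence cannot spuriously push $\MDN(n)$ below the cost of the disproof actually contained in the tree. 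Once this coincidence of the heredity minimum with a solved child is established, $\MDN(n)$ equals the minimal contained disproof cost and the induction closes.
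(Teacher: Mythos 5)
Your induction has the same outline as the paper's proof (base case from Table~\ref{tab:init}, step from Table~\ref{tab:heredity}, both polarities carried simultaneously because negation swaps them), but you part ways with it on a substantive point. The paper disposes of Prop.~\ref{prop:cost} in one line, as a ``straightforward'' induction; the obstacle you isolate on the disproof side of $\wedge$ and $\FORALL_a$ is deferred to the separate Prop.~\ref{prop:minimal-child}, which is only combined with Prop.~\ref{prop:cost} in the final minimality theorem. Your instinct that the induction is \emph{not} straightforward there is sound: for an arbitrary exploration tree the statement can actually fail. With $A_\wedge=\sum$, $k(p_1)=k(p_1')=5$, $k(p_2)=k(p_2')=1$, consider the tree for $\bigl(q,(p_1\wedge p_1')\wedge(p_2\wedge p_2')\bigr)$ in which the child for $p_1\wedge p_1'$ is fully expanded and disproved (both atoms false at $q$, contained disproofs of cost $5$) while the child for $p_2\wedge p_2'$ is an unexpanded leaf with $\MDN=J(p_2\wedge p_2')=1$: the root has $\MPN=\infty$, every disproof it contains costs $5$, yet Table~\ref{tab:heredity} yields $\MDN=1$. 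So the proposition holds only under the additional property that every disproved internal node has a disproved child among those minimizing $A_\wedge(\{\MDN\})$ (resp.\ $A_{\FORALL}$) --- which is exactly Prop.~\ref{prop:minimal-child}, a property of MPS-produced trees. In effect you fold Prop.~\ref{prop:minimal-child} into the induction and restrict the claim to trees MPS can build, whereas the paper keeps the two concerns separate at the price of a one-line proof that silently needs the same restriction. Your decomposition is the more careful one.

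The caveat is that the part of your fix that you make precise is the wrong part. Admissibility (Prop.~\ref{prop:admissibility}) only says that an unsolved sibling's $\MDN$ lower-bounds the cost of disproofs \emph{of that sibling's own formula}; it cannot prevent that estimate from undercutting the disproof realized through the solved child --- in the example above, $J=1$ is a perfectly honest lower bound for $p_2\wedge p_2'$ and still undercuts the realized cost $5$. What rules out undercutting is solely the selection and backpropagation dynamics: on the iteration that solves $n$, the expanded leaf lies below the child $c$ that was the $\argmin$ of $A_\wedge(\{\MDN\})$ at selection time, at most one of $\MPN(c),\MDN(c)$ changes during that update (here $\MPN(c)$ jumps to $\infty$ while $\MDN(c)$ stays constant), and the siblings' numbers do not change at all, so $c$ is disproved and is still the $\argmin$ afterwards. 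This one-iteration argument --- the paper's proof sketch of Prop.~\ref{prop:minimal-child} --- is precisely the ingredient your proposal leaves as an expectation (``I expect to discharge this\dots''); until you prove it, your treatment of the delicate case is a plan rather than a proof, and the admissibility appeal does not substitute for it.
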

\begin{fullversion}
\begin{proof}
  Straightforward structural induction on the shape of the tree using the first half of Table~\ref{tab:init} for the base case and Table~\ref{tab:heredity} for the induction step.
\qed
\end{proof}
\end{fullversion}
\begin{proposition}
  \label{prop:lower-bound}
  If a node $n$ is associated with $(q, \phi)$, then for any proof $m$ (resp.~disproof) that $q \models \phi$, we have $\MPN(n) \leq K(m)$ (resp.~$\MDN(n)\leq K(m)$).
\end{proposition}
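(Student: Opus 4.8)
The plan is to prove the statement by structural induction on the exploration tree rooted at $n$, reading the minimal (dis)proof number as a lower bound that is already witnessed by the admissible heuristics at the leaves and then propagated upward by the heredity rule of Table~\ref{tab:heredity}. It is worth noting that this proposition generalizes the admissibility of $I$ and $J$ (Prop.~\ref{prop:admissibility}): that earlier result is exactly the special case in which $n$ is an unexpanded leaf, carrying $\MPN(n)=I(\phi)$ and $\MDN(n)=J(\phi)$. So the induction amounts to showing that the lower bound is preserved as the tree grows.

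For the base case I would distinguish the two kinds of leaves. If $n$ is a terminal node labelled $(q,p)$, then by Definition~\ref{def:proof} any proof $m$ of $q\models p$ is the single node $m$ with $K(m)=k(p)$ (Table~\ref{tab:cost}), while Table~\ref{tab:init} gives $\MPN(n)=k(p)$ since $p\in\pi(q)$; hence $\MPN(n)\le K(m)$, and the disproof case (with $p\notin\pi(q)$) is symmetric. If instead $n$ is a non-terminal leaf, then $\MPN(n)=I(\phi)$ and $\MDN(n)=J(\phi)$, and the required bounds $I(\phi)\le K(m)$ and $J(\phi)\le K(m)$ are precisely Prop.~\ref{prop:admissibility}.

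For the inductive step I would handle each connective by combining Table~\ref{tab:heredity} (value of $\MPN(n)/\MDN(n)$), Definition~\ref{def:proof} (shape of $m$), Table~\ref{tab:cost} (value of $K(m)$), and the monotonicity of the aggregators. For $\phi=\neg\phi'$ the roles of proof and disproof swap, so $\MPN(n)=\MDN(c)\le K(m')=K(m)$ follows immediately from the induction hypothesis applied to the child $c$ and the disproof $m'$ associated with $(q,\phi')$. For $\phi=\phi_1\wedge\phi_2$ on the proof side, $m$ has children $m_1,m_2$ proving the two conjuncts, the induction hypothesis gives $\MPN(c_i)\le K(m_i)$ for the children $c_i$ of $n$, and applying the monotonicity $A_\wedge(X)\le A_\wedge(\{x\}\cup X)\le A_\wedge(\{y\}\cup X)$ termwise yields $\MPN(n)=A_\wedge(\{\MPN(c_1),\MPN(c_2)\})\le A_\wedge(\{K(m_1),K(m_2)\})=K(m)$. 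The modal case $\phi=\FORALL_a\phi'$ on the proof side is identical, with one child per successor $q\xrightarrow{a}q'$ and the aggregator $A_{\FORALL}(a,\cdot)$ in place of $A_\wedge$.

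I expect the only delicate point to be the disproof side of the conjunction and the box, where Table~\ref{tab:heredity} takes a minimum, e.g. $\MDN(n)=\min_{c\in C}A_\wedge(\{\MDN(c)\})$. There a disproof $m$ singles out one conjunct (resp.~one successor) through its unique child $m'$, and I would bound $\MDN(n)$ by the matching term of the minimum, $\MDN(n)\le A_\wedge(\{\MDN(c_i)\})\le A_\wedge(\{K(m')\})=K(m)$, using the induction hypothesis $\MDN(c_i)\le K(m')$ and monotonicity. The main obstacle is that this step is sound only when the child $c_i$ targeted by $m'$ actually appears among the children $C$ of $n$: a partially expanded minimum node could otherwise report a value larger than the cost of a disproof routed through an undeveloped branch. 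This is exactly why I would run the induction over trees whose expanded internal nodes carry their full complement of children, as produced by the \texttt{extend} procedure, which always adds both conjuncts and all successors at once; then $C$ ranges over all conjuncts (resp.~successors) and the minimum is a genuine lower bound. With full expansion guaranteed, the proof side never relies on this (monotonicity means fewer children only decrease $A_\wedge$, so the bound is preserved a fortiori), and the disproof side is settled, completing the induction.
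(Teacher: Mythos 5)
Your proof is correct and follows essentially the same route as the paper's: structural induction on the exploration tree, with Table~\ref{tab:init} and the admissibility of $I$ and $J$ (Prop.~\ref{prop:admissibility}) at the leaves, and Table~\ref{tab:heredity} together with monotonicity of the aggregators at internal nodes. The one point where you go beyond the paper---insisting that internal conjunction and box nodes carry their full complement of children---is a genuine and necessary caveat rather than a mere precaution: on a partially expanded $\wedge$-node whose only child is the true conjunct, the heredity rule yields $\MDN(n) = A_\wedge(\{\infty\}) = \infty$ even though a cheap disproof exists through the missing conjunct, so the bound really does hold only for trees in which \texttt{extend} adds all children at once (as \ac{MPS} does), a restriction the paper's one-line proof leaves implicit.
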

\begin{fullversion}
\begin{proof}
  Structural induction on the shape of the tree, using the second half of Table~\ref{tab:init} and the admissibility of $I$ and $J$ (Prop.~\ref{prop:admissibility}) for the base case and Table~\ref{tab:heredity} for the inductive case.
  \qed
\end{proof}
\end{fullversion}

Since the aggregators for the cost function are increasing functions, then $\MPN(n)$ and $\MDN(n)$ are non decreasing as we add more nodes to the tree $n$.

\begin{proposition}
  \label{prop:minimal-child}
  For each disproved internal node $n$ in a tree returned by the \ac{MPS} algorithm, at least one of the children of $n$ minimizing the $\MDN$ is disproved.
\end{proposition}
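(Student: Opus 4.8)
The plan is to read the disproved child directly off a minimal-cost disproof that a solved node is guaranteed to contain, and then to verify that this witnessing child is among those attaining the minimum in the heredity rule for $\MDN$. I would treat the two shapes whose $\MDN$ is genuinely given by a minimisation over the children, namely $(q,\phi_1\wedge\phi_2)$ and $(q,\FORALL_a\phi)$. For a node $(q,\neg\phi')$ no minimisation occurs --- Table~\ref{tab:heredity} merely sets $\MDN(n)=\MPN(c)$ for the unique child $c$, whose being proved is exactly what makes $n$ disproved --- so ``minimising the $\MDN$'' carries no choice there, and that shape would be governed by the symmetric claim about proved nodes and the $\MPN$.

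First I would apply Proposition~\ref{prop:solving}: since $n$ is disproved, the subtree at $n$ contains a disproof of $q\models\phi$, and by Proposition~\ref{prop:cost} one such contained disproof $D$ has cost $K(D)=\MDN(n)$ (finite, as an actual disproof is a finite tree built from finite costs). Now I invoke the shape of disproofs from Definition~\ref{def:proof}: at a conjunction, resp.\ box, node $D$ has exactly one child, located under a single child $c^{\dagger}$ of $n$, and the part $D^{\dagger}$ of $D$ rooted at $c^{\dagger}$ is a disproof of the label of $c^{\dagger}$. Hence $c^{\dagger}$ contains a disproof and, by Proposition~\ref{prop:solving} again, is itself disproved. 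Because $D$ is a subtree of the exploration tree, $c^{\dagger}$ is a genuine child of $n$, hence one of the children over which $\MDN(n)$ minimises.

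It remains to show that $c^{\dagger}$ attains that minimum. Writing $A$ for $A_\wedge$ or for $A_{\FORALL}(a,\cdot)$ according to the shape of $n$, Table~\ref{tab:cost} gives $K(D)=A(\{K(D^{\dagger})\})$, while Proposition~\ref{prop:lower-bound} gives $\MDN(c^{\dagger})\leq K(D^{\dagger})$. Monotonicity of the aggregators then yields $A(\{\MDN(c^{\dagger})\})\leq A(\{K(D^{\dagger})\})=K(D)=\MDN(n)$. Conversely, Table~\ref{tab:heredity} gives $\MDN(n)=\min_{c}A(\{\MDN(c)\})\leq A(\{\MDN(c^{\dagger})\})$ since $c^{\dagger}$ is a child of $n$. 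Squeezing the two inequalities forces $A(\{\MDN(c^{\dagger})\})=\MDN(n)$, so $c^{\dagger}$ is a minimising child, and we have already seen that it is disproved.

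The step I expect to be the crux is this last reconciliation. The child $c^{\dagger}$ carrying the minimal disproof and the child attaining the minimum in the heredity rule are specified by completely unrelated conditions, and a priori some unsolved child could have a strictly smaller aggregated $\MDN$ than $c^{\dagger}$. What closes the gap is the exact identity $K(D)=\MDN(n)$ of Proposition~\ref{prop:cost} --- the mere lower bound of Proposition~\ref{prop:lower-bound} would be too weak --- together with the monotonicity of $A_\wedge$ and $A_{\FORALL}$, after which the squeeze pins $c^{\dagger}$ to the minimum. The only remaining care is the bookkeeping that $D$ sits inside the exploration tree, so that $c^{\dagger}$ really lies in the multiset over which the minimum is taken, plus the scope clarification above that negation nodes fall outside the statement.
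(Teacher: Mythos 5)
Your argument is correct as an implication, but it runs in the wrong direction and is circular within the paper's development. The load-bearing step---as you yourself note---is the exact identity $K(D)=\MDN(n)$ from Proposition~\ref{prop:cost}. That proposition, however, is not an independently available fact: it is \emph{false} for exploration trees in general, and its ``straightforward structural induction'' goes through at a disproved internal node only if some child attaining $\min_C A_\wedge(\{\MDN\})$ (resp.\ $\min_C A_{\FORALL}(a,\{\MDN\})$) is itself disproved---which is precisely Proposition~\ref{prop:minimal-child}. To see that no static property forces this, take $A_\wedge(X)=\sum X$, $k(p_1)=1$, $k(p_2)=2$, and the exploration tree with root $n$ labelled $(q,(\neg p_1)\wedge p_2)$ and two leaves: $c_1$ labelled $(q,\neg p_1)$, unexpanded, carrying its init-leaf values $\MPN(c_1)=\MDN(c_1)=1$, and $c_2$ labelled $(q,p_2)$ with $p_2\notin\pi(q)$, expanded, so $\MPN(c_2)=\infty$ and $\MDN(c_2)=2$ by Table~\ref{tab:init}. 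By Table~\ref{tab:heredity}, $\MPN(n)=\infty$, so $n$ is disproved, yet its unique $\MDN$-minimising child is the unsolved $c_1$, and the only disproof contained in the tree (through $c_2$) has cost $2>\MDN(n)=1$. On this perfectly legal exploration tree both Proposition~\ref{prop:minimal-child} and Proposition~\ref{prop:cost} fail; what excludes it is nothing about the final tree's structure, but the fact that \ac{MPS} would have selected $c_1$, not $c_2$, for expansion. Any correct proof must therefore invoke the algorithm's dynamics; deducing Proposition~\ref{prop:minimal-child} from Proposition~\ref{prop:cost} assumes exactly what has to be proven.

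That dynamic argument is the paper's proof, and it is the idea missing from yours: consider the iteration on which $n$ became solved. Updates propagate only along the path selected on that iteration, so by Table~\ref{tab:selection} the child $c$ of $n$ through which the change arrived was one minimising $A_\wedge(\{\MDN\})$ (resp.\ $A_{\FORALL}(a,\{\MDN\})$); on that iteration $\MPN(c)$ rose from a finite value to $\infty$ while $\MDN(c)$ stayed constant, and the effort numbers of $c$'s siblings were untouched; since descents never re-enter solved subtrees, these values---hence the existence of a minimising child that is disproved---persist into the returned tree. Your squeeze $\MDN(n)\leq A_\wedge(\{\MDN(c^{\dagger})\})\leq A_\wedge(\{K(D^{\dagger})\})=K(D)=\MDN(n)$ is a valid consequence of Proposition~\ref{prop:cost}, and your scoping remark about negation nodes is apt (the paper's statement is indeed loose there), but in any sound ordering of the results Proposition~\ref{prop:minimal-child} must be proved first, from the selection policy, and Proposition~\ref{prop:cost} then derived from it---not the reverse.
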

\begin{fullversion}
\begin{proof}[Sketch]
  If we only increase the minimal (dis)proof number of a leaf, then for each ancestor, at least one of either the minimal proof number of the minimal disproof number remains constant.

  Take a disproved internal node $n$, and assume we used the selection procedure described in Table~\ref{tab:selection}.
  On the iteration that lead to $n$ being solved, the child $c$ of $n$ selected was minimizing the $\MDN$ and this number remained constant since $\MPN(c)$ raised from a finite value to $\infty$.

  Since the $\MDN$ of the siblings of $c$ have not changed, then $c$ is still minimizing the $\MDN$ after it is solved.
  \qed
\end{proof}
\end{fullversion}

Combining Prop.~\ref{prop:cost},~\ref{prop:lower-bound}, and~\ref{prop:minimal-child}, we get the following theorem.
\begin{theorem}
  The tree returned by the \ac{MPS} algorithm contains a (dis)proof of minimal cost.
\end{theorem}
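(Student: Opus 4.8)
The plan is to combine the three preceding propositions to extract from the returned tree a (dis)proof whose cost is exactly the root's effort number, and then to observe that this number lower-bounds the cost of every competing (dis)proof. First I would invoke Theorem~\ref{thm:correctness} to obtain that the returned tree $T$ with root $r$ associated to $(q,\phi)$ contains a (dis)proof. By the proof/disproof duality apparent in Tables~\ref{tab:init} and~\ref{tab:heredity} (which mirror each other under the swap of $\MPN$ and $\MDN$), it suffices to treat one case; suppose $T$ contains a proof, so that by Prop.~\ref{prop:solving} we have $\MDN(r)=\infty$, while $\MPN(r)<\infty$ since a proof has finite cost under the aggregator assumptions.

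Next I would make the notion of ``contained (dis)proof'' explicit by describing an extraction that walks down $T$ from $r$. At negation nodes, and at conjunction and box nodes while proving, the extraction retains the relevant children unchanged, matching exactly the shape required by Definition~\ref{def:proof}. The delicate case is disproving a conjunction or a box node, where a disproof must keep a single child: there I would select a child that both minimizes the aggregated $\MDN$ and is itself disproved. That such a child exists is precisely the content of Prop.~\ref{prop:minimal-child}, and it is the one step where that proposition is indispensable.

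With the extraction fixed, I would prove by structural induction on $T$ that the cost of the extracted (dis)proof equals the root effort number, i.e.\ $K(\text{extracted proof})=\MPN(r)$; this is Prop.~\ref{prop:cost}. The base case follows from the first half of Table~\ref{tab:init}, and the inductive step follows because Table~\ref{tab:heredity} computes the effort numbers with exactly the same aggregators $A_\wedge$ and $A_{\FORALL}$ used for $K$ in Table~\ref{tab:cost}; the $\min$ appearing in the $\MDN$ rows is realised by the minimizing disproved child selected above, so the aggregate over the single retained child equals that $\min$. Finally, Prop.~\ref{prop:lower-bound} gives $\MPN(r)\le K(m)$ for every proof $m$ that $q\models\phi$, so the extracted proof attains this lower bound and is of minimal cost; the disproof case is identical with $\MDN$ in place of $\MPN$.

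I expect the main obstacle to lie in the disproof extraction: one must simultaneously meet two demands on the retained child, namely that it be disproved (so the extracted object is a genuine disproof per Definition~\ref{def:proof}) and that it realise the minimal aggregated cost (so that equality with $\MDN(r)$ holds rather than a strict inequality). Reconciling these two requirements is exactly what Prop.~\ref{prop:minimal-child} secures, which is why it appears alongside Prop.~\ref{prop:cost} and Prop.~\ref{prop:lower-bound} as an ingredient of the theorem.
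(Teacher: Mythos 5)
Your proposal is correct and follows essentially the same route as the paper, which proves this theorem simply by combining Prop.~\ref{prop:cost}, Prop.~\ref{prop:lower-bound}, and Prop.~\ref{prop:minimal-child}. Your write-up just makes explicit what the paper leaves implicit: the extraction of the contained (dis)proof, the role of Prop.~\ref{prop:minimal-child} in handling the disproved conjunction/box nodes, and the final comparison via the lower bound.
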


\subsection{Optimality of the Algorithm}

The \ac{MPS} algorithm is not optimal in the most general sense because it is possible to have better algorithm in some cases by using transpositions, domain knowledge, or logical reasoning on the formula to be satisfied.

For instance, take $\phi_1 = \EXISTS_a (p \wedge  \neg p)$ and $\phi_2$ some non trivial formula satisfied in a state $q$.  
If we run the \ac{MPS} algorithm to prove that $q \models \phi_1 \vee \phi_2$, it will explore at least a little the possibility of proving $q \models \phi_1$ before finding the minimal proof through $\phi_2$.
We can imagine that a more ``clever'' algorithm would recognize that $\phi_1$ is never satisfiable and would directly find the minimal proof through $\phi_2$.

Another possibility to outperform \ac{MPS} is to make use of transpositions to shortcut some computations.
\ac{MPS} indeed explores structures according to the \ac{MMLK} formula shape, and it is well-known in modal logic that bisimilar structures cannot be distinguished by \ac{MMLK} formulas.
It is possible to express an algorithm similar to \ac{MPS} that would take transpositions into account, adapting ideas from \ac{PNS}~\cite{SchijfAU1994,Mueller2003CG,KishimotoMueller2005}.
We chose not to do so in this article for simplicity reasons.

Still, \ac{MPS} can be considered optimal among the programs that do not use reasoning on the formula itself, transpositions or domain knowledge.
Stating and proving this property formally is not conceptually hard, but we have not been able to find simple definitions and a short proof that would not submerge the reader with technicalities.
Therefore we decided only to describe the main ideas of the argument from a high-level perspective.


\begin{definition}
  A pair $(q^\prime, \phi^\prime)$ is \emph{similar} to a pair $(q, \phi)$ with respect to an exploration tree $n$ associated with $(q, \phi)$ if $q^\prime$ can substitute for $q$ and $\phi^\prime$ for $\phi$ in $n$.
\end{definition}


Let $n$ associated with $(q, \phi)$ be an exploration tree with a finite $\MPN$ (resp.~$\MDN$), then we can construct a pair $(q^\prime, \phi^\prime)$ similar to $(q, \phi)$ with respect to $n$ such that there is a proof that $q^\prime \models \phi^\prime$ of cost exactly $\MPN(n)$ (resp.~a disproof of cost $\MDN(n)$).

\begin{definition}
  An algorithm $A$ is \emph{purely exploratory} if the following holds.
  Call $n$ the tree returned by $A$ when run on a pair $(q, \phi)$.
  For every pair $(q^\prime, \phi^\prime)$ similar to $(q, \phi)$ with respect to $n$, running $A$ on $(q^\prime, \phi^\prime)$ returns a tree structurally equivalent to $n$.
\end{definition}

Depth first search, if we were to return the explored tree, and \ac{MPS} are both examples of purely exploratory algorithms.

\begin{proposition}
  If a purely exploratory algorithm $A$ is run on a problem $(q, \phi)$ and returns a solved exploration tree $n$ where $\MPN(n)$ (resp.~$\MDN(n)$) is smaller than the cost of the contained proof (resp.~disproof), then we can construct a problem $(q^\prime, \phi^\prime)$ similar with respect to $n$ such that $A$ will return a structurally equivalent tree with the same proof (resp.~disproof) while there exists a proof of cost $\MPN(n)$ (resp.~disproof of cost $\MDN(n)$).
\end{proposition}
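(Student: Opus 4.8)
The plan is to obtain this proposition, which is the crux of the optimality argument, as an almost immediate consequence of the construction stated just above (turning a finite minimal (dis)proof number into a genuine (dis)proof of that exact cost on a similar instance) together with the defining property of purely exploratory algorithms. First I would note that $\MPN(n)$ (resp.~$\MDN(n)$) is finite: by hypothesis it is strictly below the cost $K$ of the (dis)proof contained in the solved tree $n$, and that cost is finite, being an aggregation of finitely many finite base costs $k(p)\in\mathbb{R}^+$ through the aggregators, which preserve finiteness by assumption (cf.~Lemma~\ref{lemma:heuristics}). Hence $\MPN(n) < K < \infty$ and the construction applies to $n$.

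The next step is to apply that construction to $n$: since $\MPN(n)$ is finite, it yields a pair $(q^\prime, \phi^\prime)$ similar to $(q, \phi)$ with respect to $n$ for which $q^\prime \models \phi^\prime$ admits a genuine proof of cost exactly $\MPN(n)$ (dually a disproof of cost $\MDN(n)$). The mechanism behind this is that the construction alters the underlying \acl{GA} only at the unexplored \emph{frontier} of $n$, realizing the admissible lower bounds $I$ (resp.~$J$) of the frontier leaves as actual minimal (dis)proofs; aggregating these realized costs through the heredity rules of Table~\ref{tab:heredity} — which coincide with the cost rules of Table~\ref{tab:cost} — produces a (dis)proof whose total cost is precisely $\MPN(n)$ (resp.~$\MDN(n)$). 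This cheap (dis)proof makes the cheapest choice at each disjunctive node and therefore generally differs from the (dis)proof that $n$ itself realizes, which instead lives entirely inside the explored part of the tree.

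I would then conclude by invoking the two hypotheses together. Because $(q^\prime, \phi^\prime)$ is similar to $(q, \phi)$ with respect to $n$ and $A$ is purely exploratory, running $A$ on $(q^\prime, \phi^\prime)$ returns a tree structurally equivalent to $n$. Since similarity keeps fixed the labels of the explored atoms and the agents indexing the explored modalities, it preserves the base costs and the aggregators applied along the explored structure; consequently the (dis)proof contained in the returned tree is the very same one contained in $n$, of cost $K > \MPN(n)$ (resp.~$> \MDN(n)$). Thus on $(q^\prime, \phi^\prime)$ the algorithm $A$ exhibits a (dis)proof strictly more expensive than the minimum $\MPN(n)$ (resp.~$\MDN(n)$), while a (dis)proof of cost exactly $\MPN(n)$ (resp.~$\MDN(n)$) does exist, which is exactly the assertion.

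The delicate point, and the one I expect to be the main obstacle, is the \emph{locality} of the construction. One must realize the frontier lower bounds \emph{simultaneously} with a single consistent \acl{GA} — even when distinct frontier leaves mention the same state or the same subformula — without introducing any transition or altering any atom valuation inside the explored region; otherwise $A$ could expand $(q^\prime, \phi^\prime)$ differently and the purely-exploratory hypothesis would fail to transfer its behaviour verbatim. Making ``similar with respect to $n$'' precise enough to guarantee both this invisibility of the modification outside the explored tree and the preservation of the contained (dis)proof's cost is precisely the technical content flagged as liable to submerge the reader, and is where the real work of the argument lies.
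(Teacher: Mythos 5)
Your proposal is correct and follows essentially the same route the paper intends: the paper states this proposition without a formal proof (explicitly deferring the technicalities), and its high-level justification is precisely your combination of (i) the finiteness of $\MPN(n)$, (ii) the immediately preceding construction of a pair $(q^\prime, \phi^\prime)$ similar with respect to $n$ admitting a proof of cost exactly $\MPN(n)$, and (iii) the defining property of purely exploratory algorithms, which forces $A$ to return a structurally equivalent tree containing the same, strictly more expensive, (dis)proof. You also identify the same obstacle---making ``similar with respect to $n$'' precise enough that the frontier modifications are simultaneously realizable and invisible to the explored region---that the paper itself cites as its reason for presenting only the high-level argument.
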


Note that if the cost of a solved exploration tree $n$ is equal to its $\MPN$ (resp.~$\MDN$), then we can make \ac{MPS} construct a solved shared root subtree of $n$ just by influencing the tie-breaking in the selection policy described in Table~\ref{tab:selection}.

\begin{theorem}
  If a purely exploratory algorithm $A$ returns a solved exploration tree $n$, either this tree (or a subtree) can be generated by \ac{MPS} or $A$ is not guaranteed to return a tree containing a (dis)proof of minimal cost on all possible inputs. 
\end{theorem}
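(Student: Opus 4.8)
The plan is to prove the theorem by contraposition on its two disjuncts: assuming that $A$ \emph{is} guaranteed to return a tree containing a (dis)proof of minimal cost on every input, I would show that the solved tree $n$ it returns on $(q,\phi)$ has a shared-root subtree that \ac{MPS} can itself generate. I treat the proof case explicitly; the disproof case is symmetric. Since $A$ returns a solved tree $n$ containing a proof, Prop.~\ref{prop:solving} gives $\MDN(n) = \infty$, so the relevant finite effort number is $\MPN(n)$. By Prop.~\ref{prop:lower-bound}, $\MPN(n)$ is a lower bound on the cost of every proof that $q \models \phi$, in particular on the cost $K^*$ of the proof actually contained in $n$; hence $\MPN(n) \le K^*$.

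The first key step is to rule out a strict gap $\MPN(n) < K^*$. Here I invoke the preceding proposition: a purely exploratory $A$ cannot distinguish $(q,\phi)$ from any pair $(q^\prime,\phi^\prime)$ similar with respect to $n$, so it returns a structurally equivalent tree carrying the \emph{same} proof, of the same cost $K^*$. That same proposition also exhibits such a similar pair which admits a proof of cost exactly $\MPN(n) < K^*$. On this input $A$ therefore outputs a non-minimal proof, contradicting the assumed guarantee. Consequently $\MPN(n) = K^*$, i.e. the cost of the (dis)proof contained in $n$ coincides with the root effort number.

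The second key step uses the remark following that proposition: once the cost of the solved tree $n$ equals $\MPN(n)$ (resp.\ $\MDN(n)$), \ac{MPS} can be driven --- purely by resolving ties in the selection policy of Table~\ref{tab:selection} --- to develop a solved subtree of $n$ sharing its root. This subtree is generated by \ac{MPS} and witnesses the first disjunct of the theorem. Assembling the two steps gives the claim: either the gap $\MPN(n)<K^*$ occurs, which forces the second disjunct via the similar instance on which the minimality guarantee fails, or no gap occurs and \ac{MPS} reproduces a shared-root subtree of $n$.

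The main obstacle is not the logical assembly above but the two imported ingredients it rests on, which is exactly why the paper keeps them at the level of a sketch. The delicate content is the similarity and ``purely exploratory'' machinery: one must argue that an algorithm blind to the formula's meaning, to transpositions, and to domain knowledge behaves identically on all pairs that are indistinguishable \emph{within} the tree it has explored, and that whenever the aggregated number $\MPN(n)$ undershoots the genuine proof cost one can realise a similar instance meeting the lower bound $\MPN(n)$. Making ``similar'', ``structurally equivalent'', and ``purely exploratory'' precise enough to carry this step without drowning the reader in technicalities is the real difficulty, together with verifying that \ac{MPS}'s tie-breaking freedom is exactly rich enough to trace out a minimal-cost shared-root subtree of any cost-tight solved tree.
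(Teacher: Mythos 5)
Your proposal is correct and follows exactly the argument the paper intends: the dichotomy on whether $\MPN(n)$ equals or undershoots the cost of the contained (dis)proof, using the preceding proposition to derail $A$ on a similar instance in the undershoot case, and the tie-breaking remark to let \ac{MPS} trace a shared-root subtree in the tight case. Note that the paper itself deliberately leaves this at the level of a sketch (it states that formalizing ``similar'' and ``purely exploratory'' without drowning in technicalities proved difficult), and your identification of precisely those ingredients as the real obstacle matches the paper's own assessment.
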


\section{Conclusion and discussion}

We presented \acf{MPS}, a model checking algorithm for \ac{MMLK}.
\ac{MPS} has been proven correct, and it has been proved that the (dis)proof returned by \ac{MPS} was minimizing a generic cost function.
The only assumption about the cost function is that it is defined recursively using increasing aggregators.
Finally, we have shown that \ac{MPS} was optimal among the purely exploratory model checking algorithms for \ac{MMLK}.

Nevertheless, the proposed approach has a few limitations.
\ac{MPS} is a best first search algorithm and is memory intensive;
the cost functions addressed in the article cannot represent variable edge cost;
and \ac{MPS} cannot make use of transpositions in its present form.
Still, we think that these limitations can be overcome in future work.

We envision a depth-first adaptation of \ac{MPS} similar to Nagai's transformation of \ac{PNS} into \ac{DFPN}.
Alternatively, we can draw inspiration from PN$^2$~\cite{AllisvdMvdH1994} and replace the heuristic functions $I$ and $J$ by a nested call to \ac{MPS}, leading to an \ac{MPS}$^2$ algorithm trading time for memory.
These two alternative algorithms would directly inherit the correctness and minimality theorems for \ac{MPS}.
The optimality theorem would also transpose in the depth-first case, but it would not be completely satisfactory.
Indeed, even though the explored tree will still be admissibly minimal, several nodes inside the tree will have been forgotten and re-expanded multiple times.
This trade-off is reminiscent of the one between A* and its depth-first variation IDA*~\cite{Korf1985}.

Representing problems with unit edge costs is already possible within the framework presented in Sect.~\ref{sec:cost}.
It is not hard to adapt \ac{MPS} to the more general case as we just need to replace the agent labels on the transitions with (agent, cost) labels. 
This more general perspective was not developed in this article because the notation would be heavier while it would not add much to the intuition and the general understanding of the ideas behind \ac{MPS}.

Finding minimal (dis)proofs while taking transpositions into account is more challenging because of the double count problem.
While it is possible to obtain a correct algorithm returning minimal (dis)proofs by using functions based on propagating sets of individual costs instead of real values in Sect.~\ref{sec:cost}, similarly to previous work in \ac{PNS}~\cite{Mueller2003CG},
such a solution would hardly be efficient in practice and would not necessarily be optimal.
The existing literature on \ac{PNS} and transpositions can certainly be helpful in addressing efficient handling of transpositions in \ac{MMLK} model checking~\cite{SchijfAU1994,Mueller2003CG,KishimotoMueller2005}.

Beside evaluating and improving the practical performance of \ac{MPS}, future work can also study to which extent the ideas presented in this article can be applied to model checking problems in more elaborate modal logics and remain tractable.

\subsubsection{\ackname}
We would like to thank the reviewers for their helpful comments and for pointing out Cleaveland's related work~\cite{Cleaveland1989}.

\bibliographystyle{splncs03}
\bibliography{minimalProofSearch}
\end{document}